\newtheorem{theorem}{Theorem}[section]
\newtheorem{assumption}[theorem]{Assumption}
\newtheorem{definition}[theorem]{Definition}
\newtheorem{remark}[theorem]{Remark}
\providecommand{\keywords}[1]
{
	\small	
	\textbf{\textit{Keywords---}} #1
}
\newcommand{\X}{\mathcal X}
\DeclareMathOperator*{\arginf}{arg\,inf}
\title{Training Generative Networks with Arbitrary Optimal Transport costs.}
\author{Vaios Laschos \\	 Fakult\"at Elektrotechnik und Informatik\\
	Technische Universit\"at Berlin\\ vaios.laschos@tu-berlin.de. \and 
	 Jan Tinapp \\ 	 Fakult\"at Elektrotechnik und Informatik\\
	 Technische Universit\"at Berlin\\jan.tinappp@tu-berlin.de.\and
	 Klaus Obermayer \\	 Fakult\"at Elektrotechnik und Informatik\\
	 Technische Universit\"at Berlin \\ klaus.obermayer@tu-berlin.de.
	 }
\begin{document}

\maketitle

\begin{abstract}
We propose a new algorithm that uses an auxiliary neural network to express the potential of the optimal transport map between two data distributions. In the sequel, we use the aforementioned map to train generative networks. Unlike WGANs, where the Euclidean distance is {\it implicitly} used, this new method allows to {\it explicitly} use {\it any} transportation cost function that can be chosen to match the problem at hand. For example, it allows to use the squared distance as a transportation cost function, giving rise to the Wasserstein-2 metric for probability distributions, which results in fast and stable gradient descends. It also allows to use image centered distances, like the structure similarity index, with  notable differences in the results.\newline
\end{abstract}

\keywords{Generative networks, learning probability distributions, model fitting, optimal transport, Wasserstein-2.}

\section{Introduction}
In the seminal work of \cite{goodfellow2014generative}, a ground breaking  approach for training generative networks (GNs) was proposed. GNs try to generate new data given some training data $x$. This is achieved by treating the data as samples from an unknown empirical probability function $\mathbb{P}_r$ and fitting a model  $\mathbb{P}_{\theta}$ to give an estimate for this function. By drawing samples from $\mathbb{P}_{\theta},$ new data points can be generated that resemble the original dataset.
The approach in  \cite{goodfellow2014generative}, proposes to use two neural networks one of which is the {\it generator} $G(z)$ and the other acts as a {\it discriminator} $D(x)$. The {\it generator} represents the model  $\mathbb{P}_{\theta},$ while the {\it discriminator} gives a differentiable measure of how similar $\mathbb{P}_{\theta}$ is to $\mathbb{P}_r$. To train the {\it generator} the two networks participate in a two-player minimax game where the {\it discriminator} is maximizing the probability to detect if data shown to it comes from the {\it generator} or the training dataset. The {\it generator} is trying to fool the {\it discriminator} by minimizing the probability of generated data being detected. To train the networks  the value function $\displaystyle\min_G \displaystyle\max_D V(D,G) = \mathbb{E}_{x \sim \mathbb{P}_r}[\log D(x)]+\mathbb{E}_{x \sim \mathbb{P}_{\theta}}[\log(1- D(x))] $ is used by switching between taking the gradient with respect to the weights in $G(x)$ and $D(x).$  

As it is explained in  \cite{goodfellow2014generative}, the {\it discriminator} provides an error function for the {\it generator}. A fully trained {\it discriminator} will provide as a cost the relative entropy of the generated distribution $\mathbb{P}_{\theta}$  with respect to the real distribution $\mathbb{P}_r$. However relative entropy is a good metric distance for two distributions, only when they have the same support. If the distributions have different supports the error function cannot provide any information for the points charged by  $\mathbb{P}_{\theta}$ and not by $\mathbb{P}_r$. In practice this means that if the {\it discriminator} is well trained then it does not provide a gradient for the {\it generator} to learn, which result to the phenomenon of mode collapse. To fix that, researchers suggested various technical solutions, among them being the introduction of vanishing noise, since by adding some noise, the two  distributions have always the same support. However, most of these solutions were producing other problems in return. 

 \subsection{WGANs and beyond}
To deal with the issues that the original GANs exhibited, various new approaches were introduced, with the most noticeable being the Wasserstein GAN in \cite{Arjovsky2017}, that views the problem as an optimal transport task. In this approach the Wasserstein-1 distance between samples from the real distribution and samples from the current model is approximated and then minimized by changing the {\it generator} weights. In order to find the gradient to change the model, a differentiable way of calculating the Wasserstein-1 distance is required. This issue is resolved by introducing a neural network, the {\it critic}, that gets trained to approximate the Wasserstein distance between samples form both distributions.
 
To adapt the model with respect to the distance, the following formula was used: 
\begin{equation}\label{asa}
\begin{split}
W_{1}(\mathbb{P}_{\theta},\mathbb{P}_{r})&:=
\sup_{\psi\in\mathrm{Lip}_{1}}\bigg\{\int \psi\, d\mathbb{P}_{\theta}-\int\psi\, d\mathbb{P}_{r}\bigg\}.
\end{split}
\end{equation}
were $\psi$ is encoded by the {\it critic} and  $\|\psi\|_L \leq 1 $ constrains it to be 1-Lipschitz. 
Before we proceed, we would like to emphasize that unlike GANs, in WGANs the two networks do not act in an antagonistic fashion. It is possible, at least in theory, to fully train the  {\it critic} in every training step of the {\it generator}. Something like that is not advisable though, because it would be extremely expensive in computational time.

The simplicity for calculating the Wasserstein-1 distance comes from its dual formulation \eqref{asa}. One needs a \textbf{single function}, that can be encoded by the {\it critic} neural network, to express $\psi$ in \eqref{asa}, which in the sequel can be used to train the {\it generator}. At the same time, for different transportation costs, a nice formula like this is not known. It is important to note that, when one trains the {\it generator} with Euclidian distance as an underlying transport cost, the {\it generator} produces elements that are closer to the real ones with respect to that distance. Something like that may not be enticing when one wants to deal with \textbf{distributions of data that have a different intrinsic metric}. For example, when one tries to measure distance between two images, the Euclidean distance between pixels may not the be the right choice as Figure 1 indicates.
\begin{figure}[t!]
	\centering
	\includegraphics[ width=0.25\textwidth]{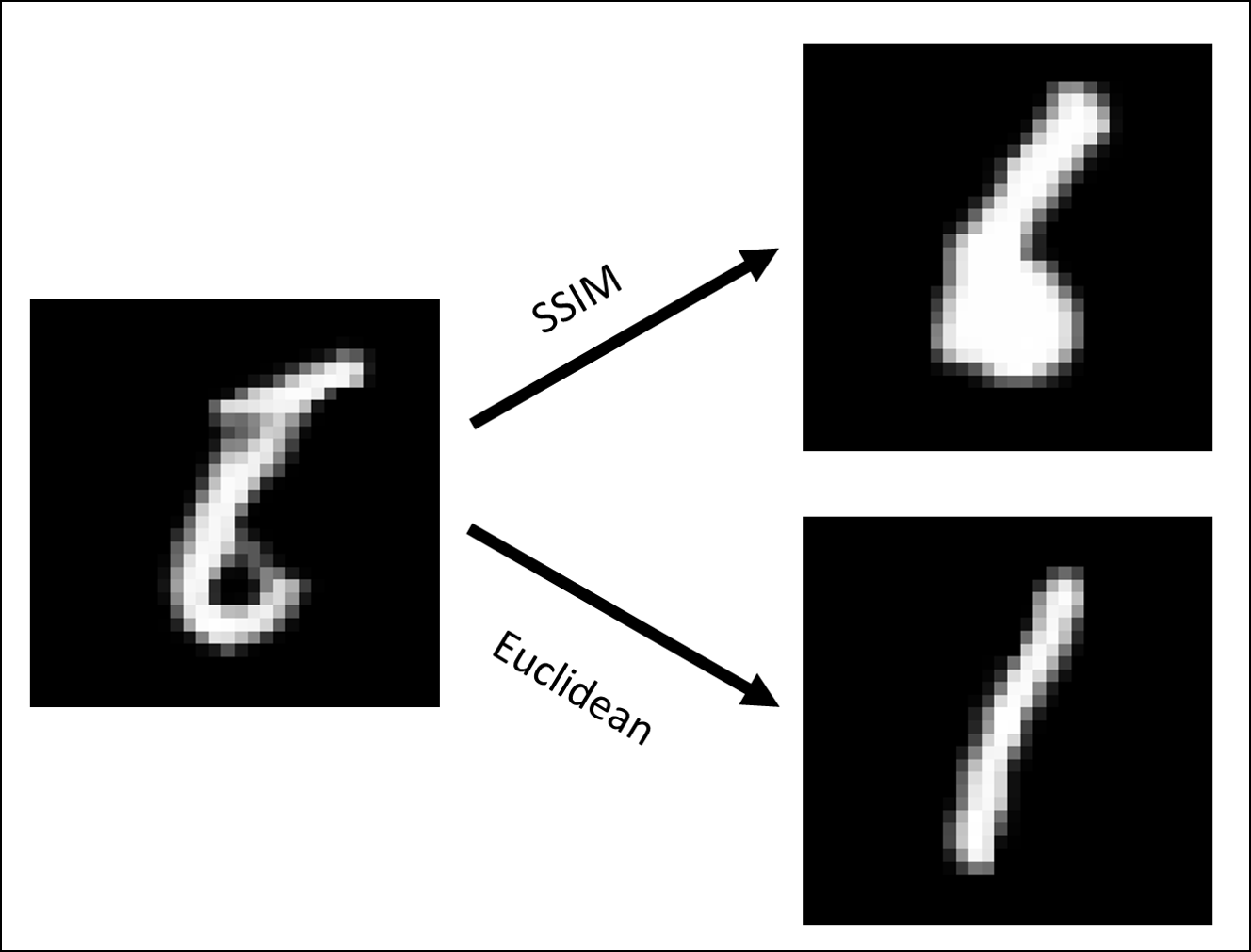}
	\caption{The closest real point to a generated one, in respect to SSIM and the Euclidean distance respectively.}
	\label{fig:RSUencountered}
\end{figure}
In this paper we propose a new algorithm for capturing the optimal transport distance between two distributions, with \textbf{arbitrary transport cost}. This way, the manifold that the {\it generator} produces could potentially fit better to the real data distribution. Also as we will argue in the sequel, our proposed method allows to optimize the training to any desirable degree, restricted only by processing power, and it can ensure that no mode collapse occurs. 

\subsection{A new approach}

  Regardless of any mathematical complexity that the derivation of our training algorithm may exhibit, its heuristic explanation can be easily grasped. We start by taking two sets of points $\X_{1}, \X_{2}.$ For visualization purposes one can think $\X_{1}$ as the set of generated points and $\X_{2}$ as the set of real points.
 For each point $x$ in $\X_{1},$  we assign one point in set $\X_{2},$ through the formula \begin{equation}\label{houhou}
 y(x,w)=\arginf_{y\in \X_{2}}\{c(x,y)+\psi_{w}(y)\},
 \end{equation}
 where $c:\X_{1}\times\X_{2},$ is some optimal transport cost and $\psi_{w}$ a function parametrized by the ``weight values'' $w.$  Now, some real points in $\X_{2}$ have many generated points in $X_{1}$ assigned to them, while at the same time, some others have no points assigned to them at all. For the real points that have many generated points assigned to them, we would like $\psi_{w}$ to increase, and for those who are under-assigned to decrease.  It is shown that the {\it assigner} $\psi_{w}$ is trained to perfection, when all points are assigned equally, and in that case, $\psi_{w}$ coincides with  the potential (``pregradient") of the optimal transport plan.  We achieve that by using
  \begin{equation}\label{hoxi}\sum_{j=1}^{\#\X_{1}}\left(\frac{\#\{\{x_{i}, 0\leq i\leq \#\X_{2}:y(x_{i},w_{0})=y_{j}\}}{\#\X_{2}}-\frac{1}{\#\X_{1}}\right)\psi_{w}(y_{j}),\end{equation}
  as a cost to our auxilary network. We observe that the change to the values of $\psi_{w}$ in a real point is weighted by the numbers of generated points assigned to it. Finally, the constant $\frac{1}{\#\X_{1}}$ that appears in the error function,  can be understood as a normalization constant.
\begin{figure}
 	\centering
 	\begin{tabular}[b]{c}
 		\includegraphics[width=0.40\textwidth]{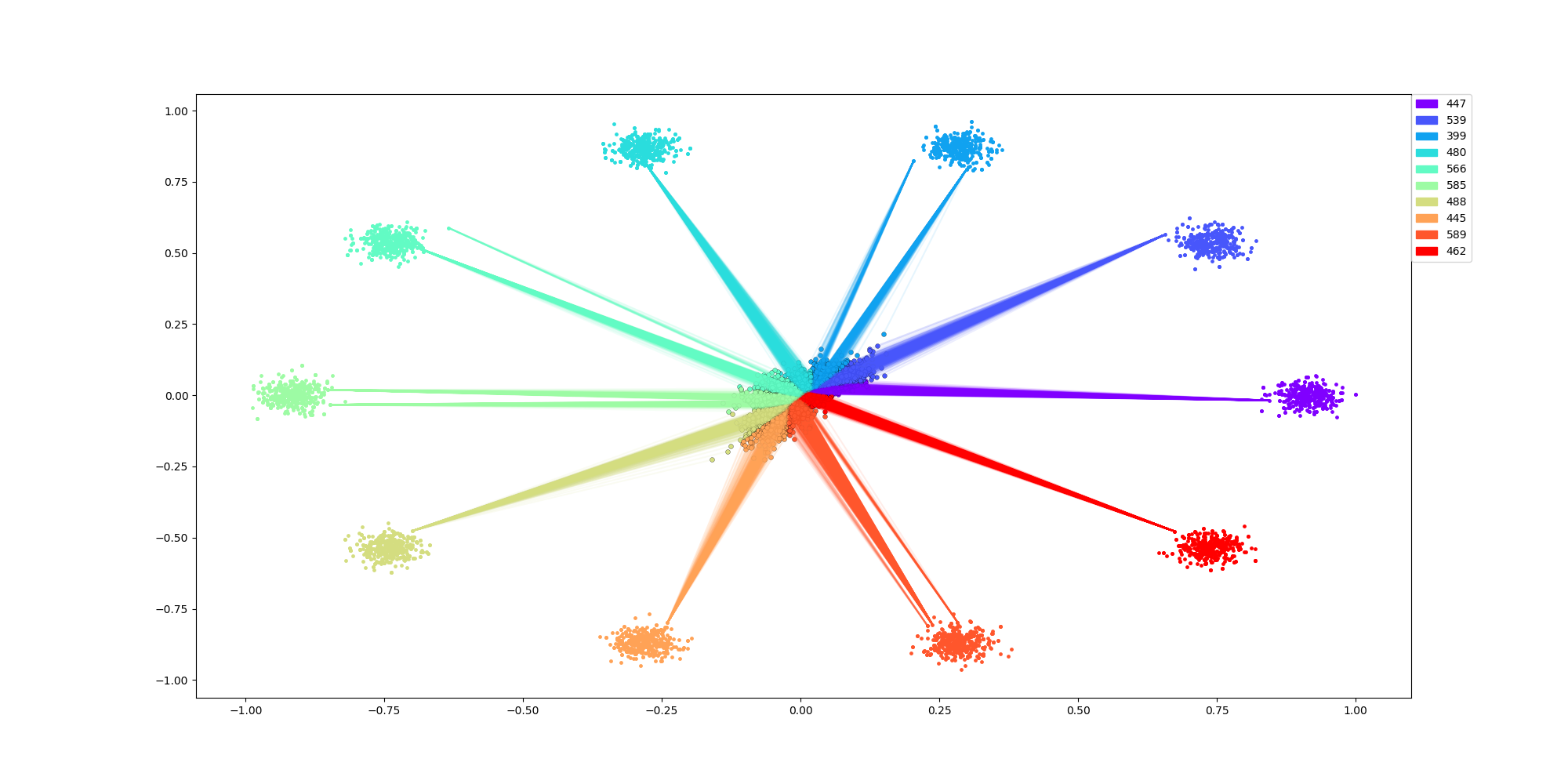}\\
 		\small{Assignments before training the {\it generator}}
 	\end{tabular}
 	\begin{tabular}[b]{c}
 		\includegraphics[width=0.40\textwidth]{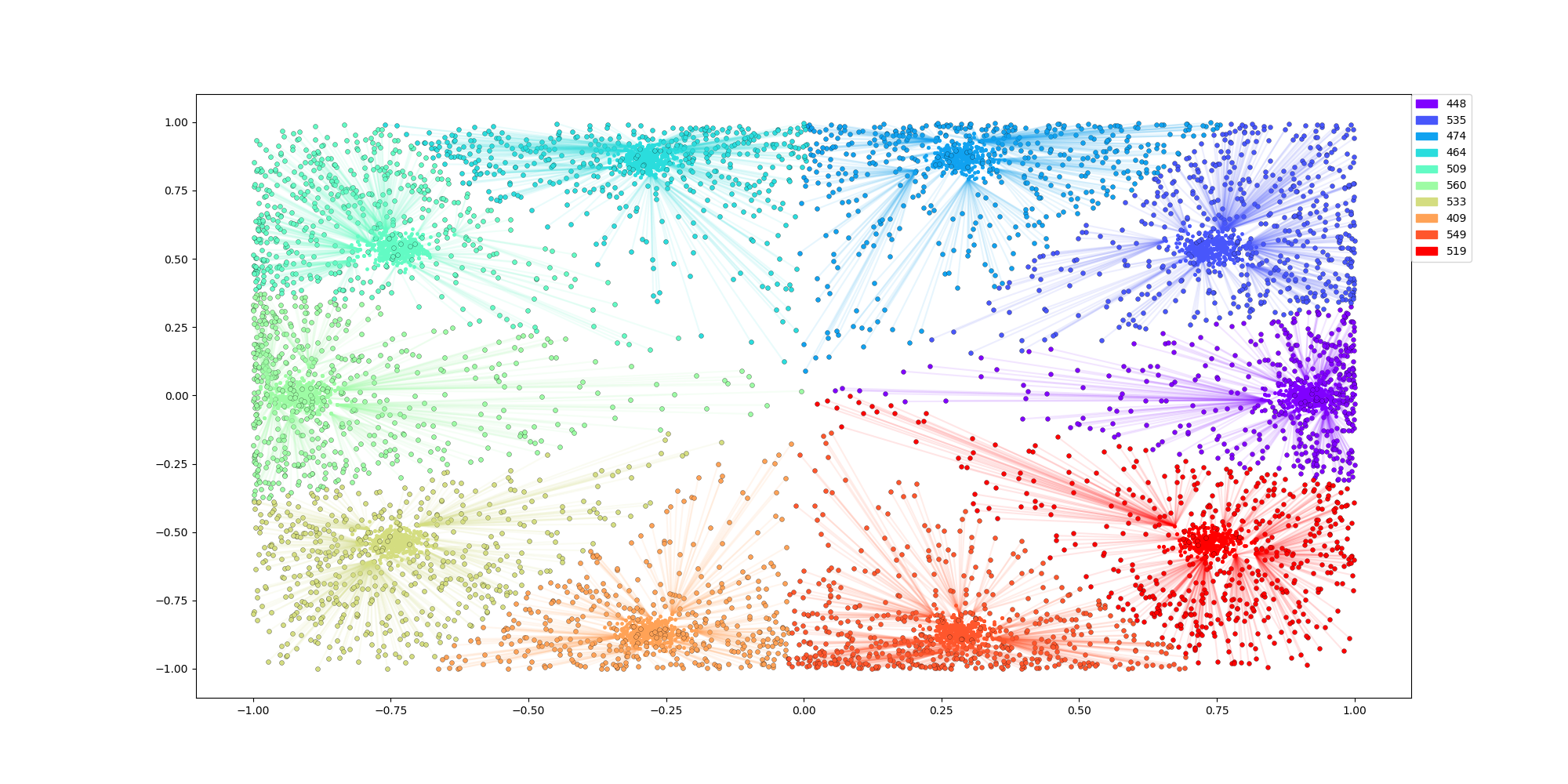}\\
 		\small{After the first training of the {\it generator}}
 	\end{tabular}
 	\begin{tabular}[b]{c}
 		\includegraphics[width=0.40\textwidth]{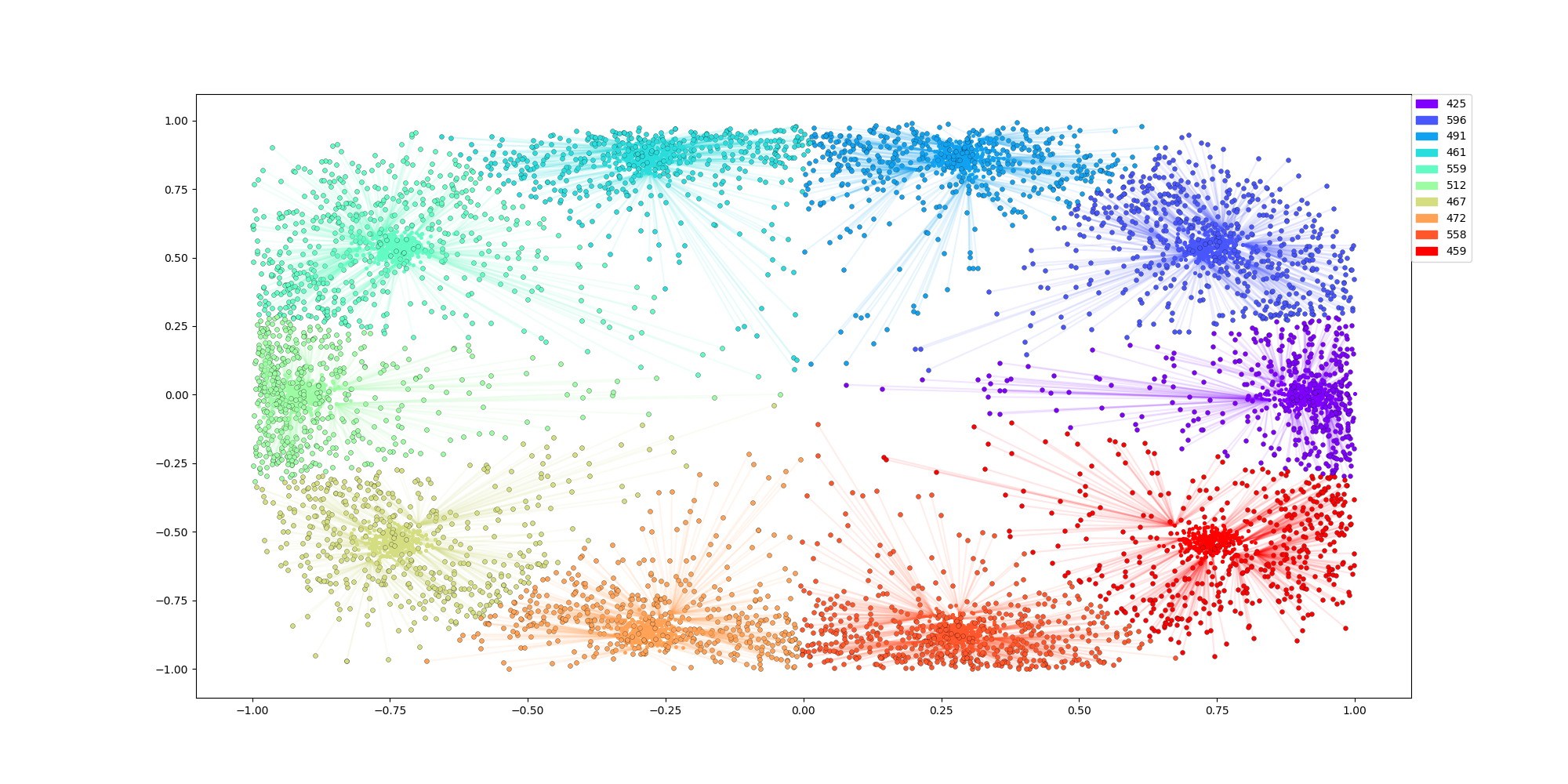}\\
 		\small{Ten steps for {\it generator}}
 	\end{tabular}
 	\begin{tabular}[b]{c}
 		\includegraphics[width=0.4\textwidth]{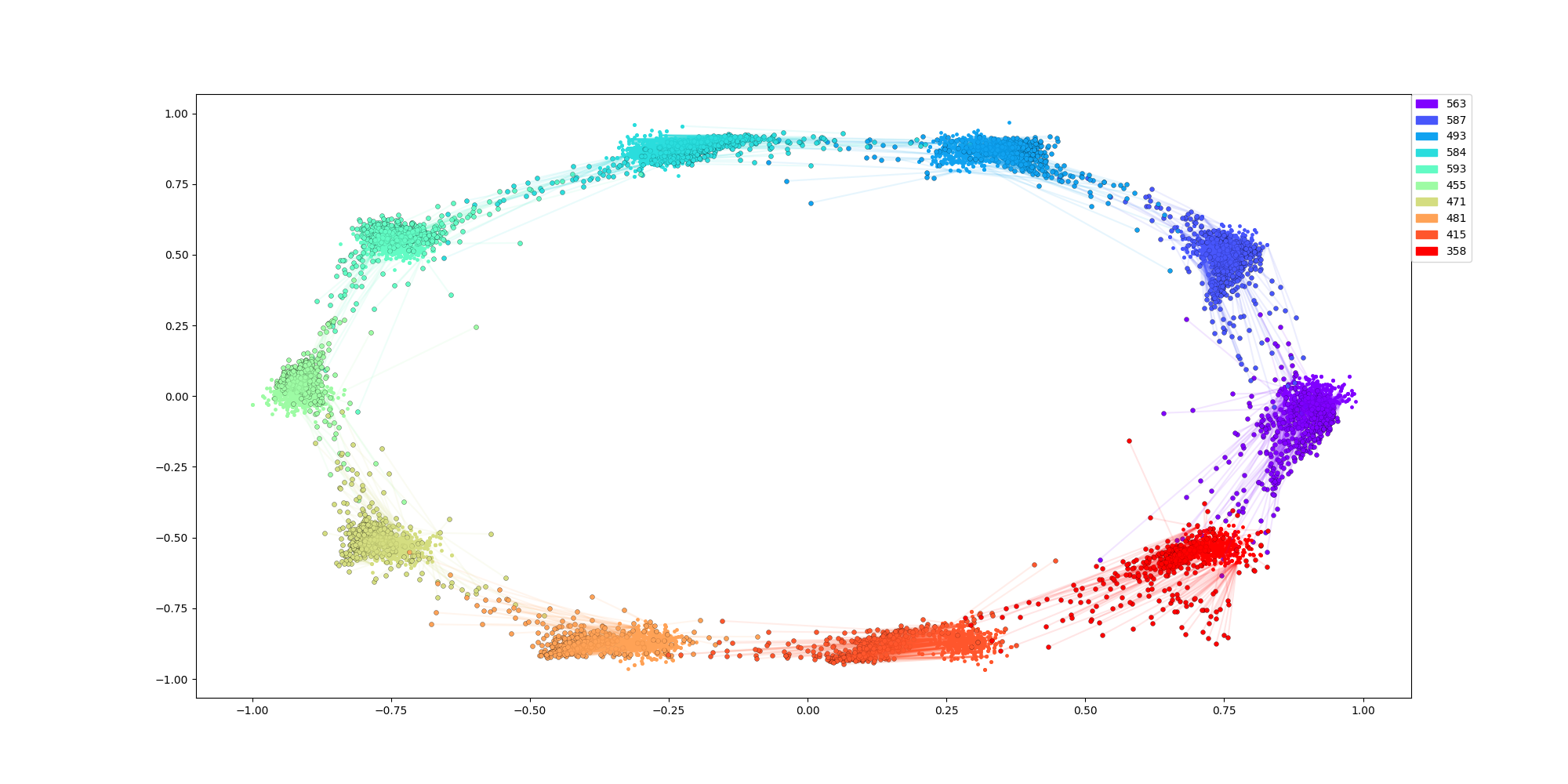}\\
 		\small{Twenty steps for the {\it generator}}
 	\end{tabular}
 	\caption{Visualization of the assignments of generated points, initially found in the middle, to a ring of gaussians.}\label{fig:assignment_training_gaussian}
 	\label{figgg}
 \end{figure}
 When the {\it assigner} is trained to perfection, then one can use the assignments to send the generated points to the reals that are assigned. As a visual aid, one can look at Figure \ref{figgg}, where we treat the case of the real data being 2000 points generated by a ring of 10 Gaussians. \newline

 \textbf{First contribution:} A proof of concept that one can use dual formulations for arbitrary optimal transport problems, i.e.
 \begin{equation}\label{be}
 \sup_{\psi\in C_{b}(\X)}\bigg\{\int \psi^{supp(\nu),c}\, d\mu(x){-}\int\psi\, d\nu(y)\,\Big|\, \psi^{supp(\nu),c}(x)=\inf_{y\in supp(\nu)}\{c(x,y)+\psi(y)\}\bigg\},
 \end{equation}
 to recover an applicable algorithm for training generative networks with good results, and not only when the cost is the euclidean distance and the resulting metric is Wasserstein-1. It is reasonable to assume, that a generator trained with a method like that and with a cost functions that captures the geometry of the original dataset, will also reproduce that geometry.\newline

\textbf{Second contribution:} We rigorously show, that from a probabilistic point of view, one can exchange the gradient with the $\arginf$ operator  in \eqref{houhou}, eventually reaching formula \eqref{hoxi}. This makes the whole \emph{dual formulation approach} computationally feasible. Calculating the gradient of \eqref{houhou} instead, would require an infeasible  number of statistically useless computations. Furthermore, in formula \eqref{houhou}, the {\it critic} is not evaluated for the generated points. Since the generated points do not directly occur in the error function as in \eqref{asa}, any number of them can be generated, in order to achieve any desired proximity to the full theoretical error function \eqref{be}. On the other hand, all the real points contribute to the value of the error function, posing this way a restriction on the size of the original dataset, which is the main drawback of our approach.
 
 \subsection{Further comments and outline}
 In the vast literature of GANs and WGANs, there are various papers that claim the use of general metric distances. Among the most noticeable are \cite{Liu} and \cite{Avraham}, that produce state of the art results. However, in these papers a regularization term is added to a WGAN training cost to achieve some of the qualities of the transport cost that they claim to apply. Although really difficult to properly formulate and prove, we expect that \textbf{no WGAN-like method, that uses a single function} $\psi$ can capture general optimal transportation in its entirety. This happens because many of the literariness that the Wasserstein-1 formulas enjoy,  break in cases where the cost function is not a metric distance. \textbf{We would like to emphasize at this point, that our algorithm, regardless of any criticism that may apply to it, does exactly that.} It trains in a matter that real optimal transportation with general cost function, between the two distributions, is achieved. \textbf{It does not train like other GANs/WGANs and it is not a variation of them.}
 
 We would finally like to mention, that although we run some comparative experiments with a good outcome, we restricted our experiments to very basic datasets like MNIST and FASHION-MNIST. We lacked both the expertise and the computational power to run more sophisticated tests with more demanding datasets. We do not claim our algorithm to be state of the art, but it acts as a proof of concept, and we aim to improve in various directions in the future. Furthermore we used the real Wasserstein distance, measured by an external application, between the whole real and generated distributions to show that at least from a mathematical perspective, our method minimizes better the distances that claim to capture.

The layout of the paper is the following. First we have a discussion on optimal transportation distances, and a short literature review on the special case of the Wasserstein-1 distance. We proceed with a mathematical analysis of our method, to which we will refer from now on as the {\it assignment method}. Next, we provide a heuristic description of the difference in training between GANs and the assignment method. Finally we provide our experimental results and we enumerate the strengths and drawbacks of the assignment method.

\section{Optimal transportation}
For the sequel, let $(\X,d_\X)$ be a  compact metric space. We will
denote by $\mathfrak{M}(\X)$ the space of all 
nonnegative and finite Borel measures on $\X$ endowed with the 
weak topology induced by the duality with the continuous and bounded
functions of $\mathrm{C}_b(\X)$. The set $\mathcal{P}(\X)\subset\mathfrak{M}(\X)$ is the subset of probability measures. Let finally $c:\X\times \X\to\mathbb{R}^{+},$ a transportation ``cost" function that is continuous with respect to the distance $d_\X.$ For two measures $\mu, \nu,$ a transport plan between $\mu$ and $\nu$ is a measure $Q\in \mathcal{P}(\X{\times} \X)$ with marginals $\mu:=\pi^0_\sharp Q, \nu:=\pi^1_\sharp Q$. In the last line, we applied the following definition.

\begin{definition}
If $\mu\in \mathcal{P}(\X)$ and $T:\X\to \mathcal{Y}$ is a Borel map, $T_\sharp \mu$ will denote the
push-forward measure on $\mathcal{Y}$, defined by
\begin{equation}\label{eq:push_forw}
T_\sharp\mu(B):=\mu(T^{-1}(B))\quad
\text{for every Borel set $B\subset Y$}.
\end{equation}
\end{definition}

Now we define the optimal transport distance between two measures $\mu,\nu.$

\begin{definition}
Given a couple of measures $\mu,\nu\in \mathcal{P}(\X),$  their
$c$-Optimal Transportation distance $\mathcal{T}_{c}$ is defined by
\begin{equation}
\mathcal{T}_{c}(\mu,\nu):=\min_{Q\in\mathcal{P}(\X{\times} \X)}\bigg\{\iint 
c(x,y)\, d Q(x,y)\,\Big|\,  \pi^0_\sharp Q=\mu \wedge\ \pi^1_\sharp Q=\nu\bigg\}.
\end{equation}
\end{definition}

For the case where $c=d_{\X},$ we recover the so called Monge-Kantorovich distance or Wasserstein-1, i.e. $W_{1}=\mathcal{T}_{d_{\X}}$. Also when $c=d^{2}_{\X},$ the square root of $\mathcal{T}_{d^{2}_{\X}},$ gives the Wasserstein-2 distance, i.e. $W_{2}=\sqrt{\mathcal{T}_{d^{2}_{\X}}}.$ In the sequel, we are going to denote the set of all integrable functions with respect to some probability measure $\mu,$ with $\mathcal{L}(\mu).$ We are also going to use the notation $supp(\mu)$ for the support of $\mu.$ From \cite{Villani2008}, we have:

\begin{theorem}[Dual formulation]
For $\mu,\nu\in\mathcal{P}(\X),$ we have
\begin{equation}\label{dual}
\begin{split}
\mathcal{T}_{c}(\mu,\nu):&{=}\!\!\sup_{\phi,\psi\in C_{b}(\X)}\!\!\bigg\{\int \phi\, d\mu(x)-\int\psi\, d\nu(y)\,\Big|\, \phi(x)+\psi(y)\leq c(x,y)\bigg\}\\&{=}\!\!
\sup_{\psi\in C_{b}(\X)}\!\!\bigg\{\int \psi^{c}\, d\mu(x)-\int\psi\, d\nu(y)\,\Big|\, \psi^{c}(x)=\inf_{y\in \X}\{c(x,y)+\psi(y)\}\bigg\}\\&{=}\!\!\!\!
\sup_{\psi\in C_{b}(\X)}\!\!\bigg\{\!\!\int\! \psi^{supp(\nu),c}\, d\mu(x){-}\int\!\psi\, d\nu(y)\,\!\Big|\!\, \psi^{supp(\nu),c}(x)=\!\!\!\!\!\inf_{y\in supp(\nu)}\!\!\!\!\{c(x,y)+\psi(y)\}\!\!\bigg\}.
\end{split}
\end{equation}
\end{theorem}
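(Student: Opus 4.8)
The plan is to prove the three expressions in \eqref{dual} equal in the order: (i) the two–potential Kantorovich duality, i.e.\ $\mathcal{T}_c(\mu,\nu)$ equals the first supremum; (ii) reduction to a single potential via the $c$–transform, yielding the second supremum; (iii) restriction of the inner infimum to $supp(\nu)$, yielding the third. Throughout I would use that $\X$ is compact and $c$ continuous, hence $c$ is bounded and uniformly continuous on $\X\times\X$ with some modulus of continuity $\omega$; also $supp(\nu)$ is a nonempty closed, hence compact, subset of $\X$. Note that to pass between the symmetric two–potential form and the $\psi^{c}$ form one performs the harmless sign substitution $\psi\mapsto-\psi$, which aligns the constraint $\phi(x)+\psi(y)\le c(x,y)$ with the infimum defining $\psi^{c}$.

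Step (i). This is the classical Kantorovich duality, and since the statement is attributed to \cite{Villani2008} I would in fact simply invoke it there; for completeness, one proves it by a minimax / Fenchel--Rockafellar argument. The feasible set $\{Q\in\mathcal{P}(\X\times\X):\pi^0_\sharp Q=\mu,\ \pi^1_\sharp Q=\nu\}$ is weakly compact by Prokhorov and $Q\mapsto\iint c\,dQ$ is weakly continuous (as $c\in C_b$), so the primal minimum is attained; the absence of a duality gap then follows from Fenchel--Rockafellar applied on $C_b(\X\times\X)$ to the linear functional $(\phi,\psi)\mapsto\int\phi\,d\mu-\int\psi\,d\nu$ and the convex indicator of the admissible cone. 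I expect \emph{this} step to be the only genuinely substantive one; everything after it is bookkeeping.

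Step (ii). Fix an admissible pair $(\phi,\psi)$ and replace $\phi$ by $\psi^{c}$, with $\psi^{c}(x)=\inf_{y\in\X}\{c(x,y)+\psi(y)\}$ as in the theorem. Three points have to be checked. First, $\psi^{c}\in C_b(\X)$: boundedness is clear from $\inf c+\inf\psi\le\psi^{c}(x)\le c(x,y_0)+\psi(y_0)$, and continuity with modulus $\omega$ follows because the family $\{c(\cdot,y)+\psi(y)\}_{y\in\X}$ is $\omega$–equicontinuous, so its pointwise infimum is $\omega$–continuous. Second, $(\psi^{c},\psi)$ is still admissible, which is immediate from $\psi^{c}(x)\le c(x,y)+\psi(y)$. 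Third, $\psi^{c}$ dominates $\phi$ pointwise, since admissibility of $(\phi,\psi)$ gives $\phi(x)\le c(x,y)+\psi(y)$ for every $y$, hence $\phi(x)\le\psi^{c}(x)$. Consequently replacing $\phi$ by $\psi^{c}$ never decreases $\int\phi\,d\mu$, so the first supremum is unchanged when restricted to pairs $(\psi^{c},\psi)$ — which is exactly the second supremum.

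Step (iii). Write $\psi^{c}$ and $\psi^{supp(\nu),c}$ for the infima over $\X$ and over $supp(\nu)$; the same equicontinuity argument gives $\psi^{supp(\nu),c}\in C_b(\X)$. For ``third $\ge$ second'' note that for fixed $\psi$ one has $\psi^{c}(x)\le\psi^{supp(\nu),c}(x)$ pointwise (an infimum over the smaller set can only be larger), hence $\int\psi^{c}\,d\mu\le\int\psi^{supp(\nu),c}\,d\mu$ and the third objective at $\psi$ dominates the second. For the reverse inequality I would argue through the primal: any admissible $Q$ is concentrated on $\X\times supp(\nu)$, and for $y\in supp(\nu)$ one has $\psi^{supp(\nu),c}(x)-\psi(y)\le c(x,y)$; integrating against $Q$ yields $\int\psi^{supp(\nu),c}\,d\mu-\int\psi\,d\nu\le\iint c\,dQ$, so taking the infimum over $Q$ and the supremum over $\psi$ gives ``third $\le\mathcal{T}_c(\mu,\nu)$''. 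Combined with Steps (i)--(ii), which already give second $=\mathcal{T}_c(\mu,\nu)=$ first, all three coincide. The main obstacle is thus Step (i) (the no–gap Kantorovich duality, either cited from \cite{Villani2008} or proved via Fenchel--Rockafellar); the only recurring technical care elsewhere is checking that $c$–transforms of bounded continuous functions stay bounded and continuous, which rests entirely on compactness of $\X$ and of $supp(\nu)$ together with the uniform continuity of $c$.
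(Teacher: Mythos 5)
Your proposal is correct, but note that the paper does not actually prove this theorem at all: it simply cites \cite{Villani2008} and moves on. Your Step (i) is therefore exactly what the paper does (invoke the classical two--potential Kantorovich duality), while Steps (ii) and (iii) supply genuine added value, since the third form of the supremum --- with the infimum restricted to $supp(\nu)$ --- is the one the paper actually uses algorithmically and is not the textbook statement. Your reductions are sound: the equicontinuity argument showing that $\psi^{c}$ and $\psi^{supp(\nu),c}$ remain in $C_{b}(\X)$ relies correctly on compactness of $\X$ and uniform continuity of $c$; the sandwich ``second $\le$ third $\le \mathcal{T}_{c}(\mu,\nu) =$ second'' in Step (iii), with the upper bound obtained by integrating $\psi^{supp(\nu),c}(x)-\psi(y)\le c(x,y)$ against any admissible plan (which is concentrated on $\X\times supp(\nu)$), is the standard and cleanest way to handle the restriction to the support. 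You also correctly spotted that the paper's first line has a sign inconsistency (the objective $\int\phi\,d\mu-\int\psi\,d\nu$ is only compatible with the constraint $\phi(x)-\psi(y)\le c(x,y)$, or equivalently requires the substitution $\psi\mapsto-\psi$ relative to the constraint as printed); handling this explicitly is necessary for the three lines to match up, and the paper glosses over it. The only step you leave to citation or to a Fenchel--Rockafellar sketch is the no--gap duality itself, which is reasonable given that the paper does the same.
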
	
\subsection{The case of the Wasserstein-1 distance: from weight clipping to the two-step approximation method}
In \cite{Arjovsky2017}, the authors suggest the use of the Wasserstein-1 distance as an error function for training GANs. In order to implement the Wasserstein-1 distance,
the authors used a special form of the dual formulation  \eqref{dual} that holds only for the case of the Wasserstein-1 distance. Specifically, they used the fact that if $\psi$ is a Lipschitz function with Lipschitz constant smaller than one, then $\psi^c$ is equal to $\psi.$ This gives rise to the formula \begin{equation}\label{dualC}
\begin{split}
W_{1}(\mu,\nu)&:=
\sup_{\psi\in\mathrm{Lip}_{1}}\bigg\{\int \psi\, d\mu(x)-\int\psi\, d\nu(y)\bigg\}.
\end{split}
\end{equation}

In order to implement this form of the distance, one had to establish that during the training procedure the function $\psi$ has to have Lipschitz constant of less than one. In order to achieve that, they introduced the method known as {\it weight clipping}, where any time a weight exceeds a specific limit, it is reduced so f can retain its Lipschitz constant. However, this first approach, although quite plausible, did not have any rigorous mathematical justification, and it is known to cause issues like stagnation or instability of the learning process, as it was pointed out in \cite{Gulrajani2017a}. Following \cite{Arjovsky2017}, in several publications, authors tried to fix, improve or replace this method. Among the most notable papers, in \cite{Gulrajani2017a} the authors propose a method to stabilize the gradient. More recently, in \cite{Liu2018} the authors propose to first solve a linear programming problem and then use the solution to train the {\it critic} to mimic the solution. Evenmore the method \cite{Liu2018} allows for more general transportation costs, provided that they satisfy the triangular inequality. However, even in the method proposed in \cite{Liu2018}, the squared distance is not applicable, since it does not satisfy the triangular inequality. Furthermore, this method requires a process that is outside the training circle, making it difficult to compare efficiency with established methods. We remind once more, that there are papers with state of the art results like \cite{Liu} and \cite{Avraham}, where it is claimed that more general optimal costs are reproduced. However, in these papers a regularization term is added to a WGAN training cost to achieve, only in part, the qualities of the transport cost that they claim to apply.

\section{Mathematical justification for a new network type and error function: The Assignment method}\label{sec:math}
Before we proceed, we will make the following assumption for the cost $c,$ that it holds true for all norms.
\begin{assumption}\label{assumption}
For every $x\in\X,$ the level sets of $c(x,\cdot),$ ie $\X_{x,a}=\{y\in\X:c(x,y)=a\},$ have Lebesgue measure zero.
\end{assumption}

Let now $\psi_{w}$ denote the function that corresponds to the assigning network  (imagine that as the analogous of a {\it critic}) with weights $w.$  We will also assume that $\nu=\sum_{j=1}^{M}\delta_{y_{j}}$ is the distribution of all real points, and $\mu={\mathbb{P}}_{\theta}$ is the distribution of $G_{\theta}(z),$ where $G_{\theta}$ is the {\it generator}, $z \sim p,$ and $p$ is the ``noise" latent distribution. We independently pick a sequence of points $x_{i}$ from ${\mathbb{P}}_{\theta}.$ We denote with $\mu_{N}=\sum_{i=1}^{N}\delta_{x_{i}}$ the \textbf{Nth-order empirical distribution} by summing the first $N$ points. Finally for every generated point $x,$ we define $y(x,w),$ as follows:
\begin{equation}
y(x,w)=\arginf_{y\in supp(\nu)}\{c(x,y)+\psi_{w}(y)\}.
\end{equation}
For fixed $(x,w),$ $y(x,w)$ is the point that is assigned by the formula for the dual. As we show in the appendix, this point is unique with probability one. If one point does not have a unique assignment then we arbitrary assign a point without loss of generality. By standard results in probability, we have that almost surely it holds $\mu_{N}\rightarrow\mu,$ and therefore

\begin{equation}\label{haha}
\begin{split}
&\mathbf{D}_{w}\left(\int \psi_{w}^{supp(\nu),c}(x) d\mu(x)-\int\psi_{w}(y)\, d\nu(y)\right)_{|_{w=w_{0}}}\\
&=\int \mathbf{D}_{w}\left(\psi_{w}^{supp(\nu),c}(x)\right)_{|_{w=w_{0}}} d\mu(x)-\int\mathbf{D}_{w}\left(\psi_{w}(y)\right)_{|_{w=w_{0}}}\, d\nu(y)\\&
=\lim_{N\rightarrow\infty}\int \mathbf{D}_{w}\left(\psi_{w}^{supp(\nu),c}(x)\right)_{|_{w=w_{0}}} d\mu_{N}(x)-\int\mathbf{D}_{w}\left(\psi_{w}(y)\right)_{|_{w=w_{0}}}\, d\nu(y)\\&=\lim_{N\rightarrow\infty}\mathbf{D}_{w}\left(\int \left(\psi_{w}(y(x,w))+c(x,y(x,w))\right) d\mu_{N}(x)-\int\psi_{w}(y) d\nu(y)\right)_{\!\!|_{w=w_{0}}}
\\&=\lim_{N\rightarrow\infty}\mathbf{D}_{w}\left(\frac{1}{N}\sum_{i=1}^{N}\psi_{w}(y(x_{i},w)) -\frac{1}{M}\sum_{j=1}^{M}\psi_{w}(y_{j})+\frac{1}{N}\sum_{i=1}^{N}c(x_{i},y(x_{i},w))\right)_{\!\!|_{w=w_{0}}},
\end{split}
\end{equation}

where $\mathbf{D}_{w}$ is the derivative with respect to $w,$ in the first and third equality we applied Leibniz's rule, and in the second equality we used the fact that $\mu_{N}\rightarrow\mu.$ Since $w_{0},$ is fixed, with probability one, we get that the last term in \eqref{haha} is equal  to

\begin{equation}\label{aha}
\begin{split}
&\lim_{N\rightarrow\infty}\mathbf{D}_{w}\left(\frac{1}{N}\sum_{i=1}^{N}\psi_{w}(y(x_{i},w_{0})) -\frac{1}{M}\sum_{j=1}^{M}\psi_{w}(y_{j})+\frac{1}{N}\sum_{i=1}^{N}c(x_{i},y(x_{i},w_{0}))\right)_{\!\!|_{w=w_{0}}}\\
&=\lim_{N\rightarrow\infty}\!\mathbf{D}_{\!w}\!\!\left(\!\frac{1}{N}\sum_{i=1}^{N}\psi_{w}(y(x_{i},w_{0})) -\frac{1}{M}\sum_{j=1}^{M}\psi_{w}(y_{j})\!\right)_{\!\!|_{w=w_{0}}}\\&=\lim_{N\rightarrow\infty}\sum_{j=1}^{M}\left(\frac{\#\{\{x_{i}, 0\leq i\leq N:y(x_{i},w_{0})=y_{j}\}}{N}-\frac{1}{M}\right)\mathbf{D}_{w}\left(\psi_{w}(y_{j})\right)_{|_{w=w_{0}}},
\end{split}
\end{equation}

where in the \eqref{aha} one can notice that $y$ does not vary with $w$ anymore, and that is why the term $\mathbf{D}_{w}\frac{1}{N}\sum_{i=1}^{N}c(x_{i},y(x_{i},w))$ vanishes. This happens because, for small changes in $w,$ the assignments do not change with probability one. The proof of that claim can be found in the appendix.

 This proves that the error functions 
 \begin{equation}\label{oror}
 \int \psi_{w}^{supp(\nu),c}(x) d\mu(x)-\int\psi_{w}(y)\, d\nu(y)
 \end{equation} and 
 
 \begin{equation}\label{Asscost}\sum_{j=1}^{M}\left(\frac{\#\{\{x_{i}, 0\leq i\leq N:y(x_{i},w_{0})=y_{j}\}}{N}-\frac{1}{M}\right)\psi_{w}(y_{j}),\end{equation} where $N$ is picked sufficient big, can be used interchangeably for the ``critic". We note that, by applying the theory of Large Deviations (see \cite{Dembo1998}), one can prove that the probability of the distance between the gradients of \eqref{oror} and \eqref{Asscost}, being bigger than some $\epsilon,$ decays exponential with the number of $N.$
 
The cost appearing in \eqref{Asscost}, will be referred from now on as the assignment cost, and it is the one that we used to train our auxiliary network (``{\it assigner}''). In the next section we are going to further analyze the idea behind the assignment cost and compare it with the {\it critic} cost in WGANs. We conclude with what we believe to be an interesting remark.

 \begin{remark}
 	By taking the derivative of \eqref{Asscost} one gets $$\sum_{j=1}^{M}\left(\frac{\#\{\{x_{i}, 0\leq i\leq N:y(x_{i},w_{0})=y_{j}\}}{N}-\frac{1}{M}\right)\mathbf{D}_{w}\left(\psi_{w}(y_{j})\right)_{|_{w=w_{0}}}.$$ If all points $x_{i}$ are assigned equally to every $y,$ then the derivative is equal to zero and therefore the training halts. Although this would be sufficient for the {\it generator} to be trained, one can wonder if we end up with an optimal assignment between $\mu_{N}$ and $\nu.$ Trying to answer this question we came up with a theorem that confirms that.	
 \end{remark}
More specifically we have

\begin{theorem}\label{minith}
Let $\mu,\nu\in\mathcal{P(\X)},$ $\psi\in \mathcal{L}(\mu).$ We further assume that there exists a unique minimizer $\widetilde{T}(x)$ 
of $\{c(x,y)+\psi(y)\},$ for $\mu$ almost every $x.$ Then if $\widetilde{T}_{\sharp}\mu=\nu,$ we have that $(I\times\widetilde{T})_{\sharp}\mu$ is an optimal plan. 
\end{theorem}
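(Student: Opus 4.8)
The plan is to exhibit the Monge plan $(I\times\widetilde T)_\sharp\mu$ together with the Kantorovich pair $(\psi^{c},\psi)$ as a primal--dual pair whose values coincide, and then to close the argument with the weak-duality inequality, which is the easy half of the dual formula \eqref{dual} and which I will re-derive in two lines. Put $\phi(x):=c\big(x,\widetilde T(x)\big)+\psi\big(\widetilde T(x)\big)$. Because $\widetilde T(x)$ attains $\inf_{y}\{c(x,y)+\psi(y)\}$ for $\mu$-a.e.\ $x$, the function $\phi$ agrees $\mu$-a.e.\ with the $c$-transform $\psi^{c}$ of \eqref{dual}; in particular $\phi(x)\le c(x,y)+\psi(y)$ for every $y$ (and it is enough to have this for $y\in supp(\nu)$). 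Since $\X$ is compact and $c$ continuous, $c$ is bounded, so, using $\int\psi\circ\widetilde T\,d\mu=\int\psi\,d\nu$ (a consequence of $\widetilde T_\sharp\mu=\nu$) together with the integrability of $\psi$, every integral appearing below is finite. It is also implicit in the hypotheses that $x\mapsto\widetilde T(x)$ is $\mu$-measurable, so that the push-forwards make sense; uniqueness of the minimiser is precisely what pins down a canonical, hence measurable, selection.

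First I would verify that $Q:=(I\times\widetilde T)_\sharp\mu$ is an admissible transport plan: its first marginal is $\mu$ because the first coordinate map is the identity, and its second marginal is $\widetilde T_\sharp\mu=\nu$ by assumption. Then I would compute its cost with the change-of-variables rule \eqref{eq:push_forw},
\[
\iint c(x,y)\,dQ(x,y)=\int c\big(x,\widetilde T(x)\big)\,d\mu(x)=\int\big(\phi(x)-\psi(\widetilde T(x))\big)\,d\mu(x)=\int\phi\,d\mu-\int\psi\,d\nu,
\]
the last step again using $\widetilde T_\sharp\mu=\nu$.

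To finish, I compare $Q$ with an arbitrary competitor $Q'\in\mathcal P(\X\times\X)$ having $\pi^{0}_\sharp Q'=\mu$ and $\pi^{1}_\sharp Q'=\nu$. Its second marginal is concentrated on $supp(\nu)$, so $\phi(x)\le c(x,y)+\psi(y)$ holds $Q'$-a.e., and integrating against $Q'$ and using its marginals gives
\[
\int\phi\,d\mu-\int\psi\,d\nu=\iint\big(\phi(x)-\psi(y)\big)\,dQ'(x,y)\le\iint c(x,y)\,dQ'(x,y).
\]
Combined with the previous display this reads $\iint c\,dQ\le\iint c\,dQ'$ for every admissible $Q'$, so $Q=(I\times\widetilde T)_\sharp\mu$ attains the minimum defining $\mathcal T_c(\mu,\nu)$ and is therefore an optimal plan (and, as a byproduct, $\psi$ and $\psi^{c}$ are optimal dual potentials).

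I do not anticipate a genuine difficulty: the statement is the classical criterion that a plan whose cost matches the value of a dual-feasible pair must be optimal, specialised here to a deterministic plan, with the dual inequality essentially handed to us by \eqref{dual}. The only places needing a line of care are the measurability and integrability of $\phi=\psi^{c}$ and the legitimacy of the change of variables; all of these are immediate in the present compact, continuous-cost setting. It is worth noting that the proof uses only that the infimum over $y$ is attained, not that the minimiser is unique --- the uniqueness hypothesis serves to align with the standing assumptions of Section \ref{sec:math} and to make the selection $\widetilde T$ measurable.
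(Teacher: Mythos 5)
Your proof is correct and is essentially the same argument as the paper's: the pointwise inequality $c(x,\widetilde T(x))+\psi(\widetilde T(x))\le c(x,y)+\psi(y)$ from minimality, integrated against a competitor plan, with the $\psi$ integrals cancelling via $\widetilde T_\sharp\mu=\nu$. The only cosmetic difference is that the paper integrates against one optimal plan $Q$ while you compare against every admissible $Q'$ (and you add some welcome remarks on measurability, admissibility of the plan, and the fact that only attainment, not uniqueness, is used).
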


We note that, by Theorem \ref{one}, if $\mu$ is an absolutely continuous measure, $\nu$ is purely atomic, and $c$ satisfies Assumption \ref{assumption} then the map $\widetilde{T}$ is always defined. To reformulate, in order to know if $\psi$ is a maximizer in the dual formulation for $\mu,\nu,$ we only have to check that  $\{c(x,y)+\psi(y)\},$ has a unique minimizer $\widetilde{T}(x)$ for $\mu$ almost every $x,$ and that minimizer satisfies $\widetilde{T}_{\sharp}\mu=\nu.$

\section{Heuristic comparison between Assignment training and WGANs}
Before we proceed with the  comparison between WGANs and our method, we would like to share a little bit from the history of our research for educational purposes. We believe that the following will help new researchers, especially those coming from a pure mathematics background, clarify some things about how WGANs work, and avoid our mistakes.

We start by, once more, reminding the reader about the dual formulation of the transport cost,  i.e.
\begin{equation}\label{reminder}
\begin{split}
\mathcal{T}_{c}(\mu,\nu){=}\!\!\!\!
\sup_{\psi\in C_{b}(\X)}\!\!\bigg\{\!\!\int \psi^{supp(\nu),c}\, d\mu(x)-\int\psi\, d\nu(y)\,\!\Big|\!\, \psi^{supp(\nu),c}(x)=\!\!\!\!\inf_{y\in supp(\nu)}\!\!\!\{c(x,y)+\psi(y)\}\!\!\bigg\}.
\end{split}
\end{equation}
Note that, in order to calculate the dual function of $\psi$, one has to go through \textbf{only} the points that are in the $supp(\nu)$ and not through the whole space $\X$. Our initial approach was to use small batches, in the same way that WGANs are traditionally trained, and with the hope that if the batches come closer then the full distributions of real and generated points will also come closer together. We note that in order to calculate the infimum  in a differentiable way we applied a smooth maximum by using the LogSumExp function. Although this approach worked fine with low-dimension datasets having only a few nodes, it failed with datasets like MNIST. What we observed there was the production of blurred idealized versions of the digit. 

Further experiments showed that an increase in the size of the batches for training both {\it generator} and {\it critic} positively increased the image quality. This lead us to believe that approximating the distance will only work when the number of samples is really high. When thinking of the problem as an optimal transport problem this becomes much clearer, as there is the  possibility that samples from the closest manifold inside the data might not be in the batch we use for approximation. Furthermore if we think of each image from the MNIST dataset as a point in dimension $\mathbb{R}^{28x28}$ of a probability distribution we would need significantly more samples to accurately capture a distribution or even find a sufficient approximation. As it was pointed out in \cite{Arora} and \cite{Arora2017}, this line of reasoning, i.e. if the batches come closer then the full distributions of real and generated points will also come closer together, for why WGANs {\it do} work, is not valid anyway. By applying a simple mass concentration argument, they show, that in order for this argument to be valid, one needs to increase the batch size exponentially with the number of dimensions. Something like that is of course impossible in practice.

In order to understand why the WGAN method works with small batches, we came with the following heuristic explanation. When the {\it critic} is fed with some real points, its value value around these points increases, and when it is fed with some generated points, the value around them decreases. This is easy to understand by carefully dissecting the error function
\begin{equation}\label{mm}
\sum_{x_{i} \sim \mathbb{P}_r}\psi(x_{i})  - \sum_{y_{j} \sim \mathbb{P}_\theta}\psi(y_{j}). 
\end{equation}
 Then, roughly speaking, the trained {\it critic} is encoding a landscape where the generated points are valleys and the real ones are hills. The {\it generator} follows that landscape to "roll down" the generated points to the reals. The idea of the gradient penalty, apart from its mathematical justification, enforces this explanation scheme, because in practice, it smoothens the landscape along the lines between real and fakes. Therefore we hypothesized that it is not the fact that the batches really capture the two distributions, which makes WGANs work, but that:
 \begin{itemize}
 	\item When the learning rate is small enough, then it makes no practical difference between applying \eqref{mm} for many consecutive small batches or for a really big one. We believe that this is due to 
 	\begin{enumerate}
 		\item the linearity of the cost
		\item the fact that the {\it critic} network has enough degrees of freedom such that local changes do not significantly affect the rest of the network. 
\end{enumerate} 
 	\item Alternating between training {\it critic} and {\it generator}, is crucial for the generated points to go closer to the real ones.
 \end{itemize} 

To test this perception, that it is not the {\it critic} cost \eqref{mm} that in every step really captures the distance but the fact that when the learning rate is small enough, then it makes no practical difference between training the {\it critic} in many consecutive small batches or in a really big one, we trained WGANs with gradient penalty and with really small batches of 2 or 3 points. Given enough time, the result was almost as good as training with a batch of 64 or 128. At the same time, we also tried to train a perfect {\it critic} first and then train the {\it generator}, and this method failed even after thousand iterations of the {\it critic}.

Now, if one wants to train with a general transportation cost using \eqref{reminder}, this is not longer possible. In order for the dual of the {\it critic} to be defined properly then one has to go through a set of reals that capture well its distribution. If one tries to apply smaller batches the training fails. For visual purposes, we would like to note that unlike in WGANs, in the Assignment method, the {\it assigner} never goes through the generated points. Furthermore, the {\it assigner} does not create a landscape where real points are high and generated points are low so the {\it generator} can use to train. In the assignment method, the {\it assigner} increases at real points which are assigned to too many generated points and decreases otherwise. When the {\it assigner} is trained to optimality then the training of the {\it generator} happens through the assignment and the cost that shapes the {\it assigner}.

\section{Psudocode and comparison with WGANs}

\begin{algorithm}
\caption{WGAN2. We use the parameters  $\alpha=0.00005$, $m=64$, $n_{{\it critic}}=5.$}\label{code:assignment}\renewcommand{\thealgorithm}{}
\floatname{algorithm}{}
\begin{algorithmic}[1]
		\Require $\alpha$ is the learning rate. $n_{{\it critic}}$, the number of {\it assigner} iterations per {\it generator} iteration. $m$, the number of assignments per iteration of the {\it assigner}.
		\Require $w_0$, initial {\it assigner} parameters. $\theta_{0}$, initial {\it generator} parameters. $\X$, Matrix containing all of the real samples.  
\While {$\theta$ has not converged}\label{conver}
	\For { $t=0,...,n_{{\it critic}}$ } 
		\For {$i=0,...,m$}\label{batching1}
			\State Sample latent space  $z \sim p(z)$ \label{latent}
			\State $K^{i} \leftarrow \left(X[argmin(A_{w}(\X) + cost(\X,G_\theta(z))], G_\theta(z)\right)$\label{c:cost}
			\State $L^{i} \leftarrow - A_{w}(K^{i}(0))$
		\EndFor
		\State $w \leftarrow RMSProp \left(\mathbf{D}_w \left(\frac{1}{m} \sum_{i=1}^m  L^{i} -\frac{1}{len(\X)}\sum_{x\in\X}A_{w}(x)\right) ,w , \alpha \right)$
	\EndFor
	\For {$i=0,...,m$}\label{batching2}
		\State $L^{i} \leftarrow  \widetilde{cost}(K^{i}(0),K^{i}(1))$ \label{c:cost_tilde}
	\EndFor
	\State $\theta \leftarrow RMSProp(\mathbf{D}_{\theta} \frac{1}{h} \sum_{i=1}^h  L^{i} ,w , \alpha)$
\EndWhile
	 \end{algorithmic}\end{algorithm}

We have the following comments regarding our psudocode
\begin{itemize}
	\item When $m$ is relatively small, the whole training behavior resembles this of WGANs. However, one can use a different approach with the choice of $m$. If $m$ is chosen to be at the same scale with the size of $\mathcal{X},$ then the real distance between real and generated points can be captured, and evenmore achieve prevention of mode collapse. More specifcially, experimenting with various datasets of up to 10000 points, we noticed that 10 times the dataset will suffice. We expect that by some probabilistic arguments, the appropriate size of generate points can be estimated. Regardless of the size of $m,$ the generated sets can be batched.
	\item  Line \ref{latent}. Traditionally the latent distribution $p(z)$ is chosen to be a Gaussian. However we noticed that if we instead choose a collection of Gaussians with small variance the results are much better. 
	\item  Line  \ref{c:cost} and \ref{c:cost_tilde}: $\widetilde{cost}$  and $cost$ have to coincide from a theoretical point of view. However, in practice, a computationally faster cost can be used for the {\it {\it assigner}}.  For example, SSIM is computationally very demanding, and by using peak signal-to-noise ratio (PSNR) for the critic instead, one can get similar results in a small portion of the time. It also appears that the {\it assigner} trains the fastest, if we multiply the cost with a constant such that the diameter of the space is equal to one. This does not change the geometry of the fitted model.
\end{itemize} 
\subsection{Practical comparison between Assignment method and WGANs.}
\textbf{Advantages of Assignment method over WGANs.}
\begin{enumerate}
	\item Since, at every step, an actual estimate for the transport distance can be calculated (when the generated batch is big enough), the method provides a quantitative way to bound from above the full transport distance between the distributions. As a byproduct, this allows in only a few iterations, to check how different parameters affect the training. For example, we noticed that with Fashion MNIST one gets the best approximation when latent dimension is around 250. Contrary, with the classical MNIST dataset, no significant difference appear if we increase the latent dimension further than 100.
	\item We can approach the original distribution in any desirable degree. Furthermore we can ensure that no mode collapse occurs.
	\item Depending on the complexity of the generated set and cost function, it is possible for the {\it assigner} to be trained to optimality by itself and without any iterations with the {\it generator}. This way, we can calculate the transport distance between the two measures, and retrieve the optimal transport map.  
	\item The method does not not depend on the architecture of the network. It can work really well with the simple dense networks unlike the traditional WGANs where the assistance from using convolutional networks is noticeable.
\end{enumerate}

\textbf{Disadvantages of Assignment method over GANs and WGANs.}

\begin{enumerate}
	\item In its current version, the assignment algorithm requires to go through all the real points every time we generate a new point. This makes it quite demanding on computational time. It appears that each {\it assigner} training step requires $mN$ computations, where $m$ is the number of generated points and $N$ is the number points in the original dataset. Now if one want to avoid mode collapse, should make $m$ in a similar scale with $N$, which results to "perfect" training time being of order $O(N^{2}).$
\end{enumerate}

\section{Experiments} \label{ch:exeriments}
In the following section, we describe the layout for the experiments that were conducted. We will start by introducing the datasets and giving an intuition why these datasets are useful to compare the performance of different approaches. Afterwards, we will define metrics that can evaluate the performance of the results.
\subsection{Datasets}
In this section, we will describe the datasets that were used for the experiment.  We would like to note, that we did not include the standard by now Cifar10 dataset, because we were not able to reproduce the claimed results of the other papers on this dataset at all. We believe that this was solely due to our inability to properly produce the required architecture for the {\it generator}. However, we trained our model with a dense generator with a reasonably good outcome. The interested reader, can use the code in the repository to check for themselves.
\subsubsection{MNIST}

The MNIST dataset is a collection of handwritten digits from 0 to 9 and labels indicating the number it should represent as an integer. The individual images consist of 28x28  greyscale values  70000 images are part of the dataset. For the experiments, we reduce the number of images to 5000 examples upscale them to 32x32. The MNIST dataset is a typical dataset for comparing different GAN models because of their popularity in the machine learning community. Additionally, the numbers are easy to recognize and blurry images can be seen at a glance.
\subsubsection{Fashion-MNIST}
The fashion-MNIST dataset was introduced in \cite{DBLP:journals/corr/abs-1708-07747} as an alternative dataset to the MNIST. The fashion-MNIST is a collection of pictures of clothings. The number of classes is kept the same to  MNIST that represent ten different clothing types. The individual images consist of 28x28 greyscale values and the number of examples in the dataset is 70000. For our experiments, we reduce the number of images to 5000 examples and upscale them to 32x32.
We chose this dataset for multiple reasons, first we we can visually detect blur in the generated images by looking at the sharpness of transition between clothings and black background. Secondly we can see if details in the image get generated like prints on t-shirts, zippers and wrinkles. Third we think that its interesting to see difference between the results of MNIST compared to this dataset because their main dimensions and pixel ranges are the same.

\subsection{Metrics}

Evaluating the performance of generative models is a nontrivial task. \cite{theis2015note} discussed the evaluation of generative image models and concluded that no single evaluation metric is accurate, and that the right choice depends on the application at hand. Similarly, in \cite{Borji_2019}, the authors concluded, after reviewing 24 quantitative and 5 qualitative measures, that there is no universal measure between model performances. We decided to follow a different route and use Wasserstein-1  as a metric for evaluation, since it functions as an error function for the generator in the case of WGANs. Furthermore, we introduce a second metric that indirectly evaluates the appearance of mode collapse.
\subsubsection{Wasserstein-1 metric}
The Wasserstein-1 distance will now be used as a metric for evaluating the generated samples after training. By calculating the distance for a large number of samples from the generator and the dataset  we can approximate how close both distributions are to each other. For the experiments, we choose to sample ten times the amount of generated points compared to the size of the original dataset to get an accurate representation of the distance between our learned model and the dataset. To find the Wasserstein-1 metric we relied on the external library  POT: Python Optimal Transport (see \cite{flamary2017pot}) to solve the optimal transport problem that implements the algorithm proposed in ``Displacement interpolation using Lagrangian mass transport'' \cite{Bonneel:2011:DIU:2070781.2024192}.

\subsubsection{Assignment Variance }
Apart from the Wasserstein-1 metric to determine how well the model has been trained we propose a metric that is depended on the cost function that the model uses. We evaluate with this metric how well a particular model achieves an equal spread of generated points around each of the real points in the dataset. To do this we take ten times the number of generated points compared to the amount of real points in the dataset. We then use the model specific cost function $c$ and find the closest real point in the dataset. By counting how many generated points get assigned to a point in the real dataset we can determine if the model is spreading the points equally or if mode collapse is occurring in the model with respect to its cost $c$. To generate a single value we calculate the variance  around the perfect result of ten assignments.
\begin{equation}
\frac{1}{\#reals}*\sum_{i=1}^{\#reals}\sqrt{(\#assignments_i -10)^2}.
\end{equation}
\section{Results and Discussion}
In this section we will proceed with the results of our experiments and with our interpretation of them.
\subsubsection{MNIST}
\begin{figure}[H]
	\setlength{\extrarowheight}{.5em}
	\Large
	\centering
	\begin{tabular}[b]{ c c }
		GAN &  \hspace{32pt} Closest Real\\		
		\includegraphics[height=0.30\textwidth,width=0.30\textwidth]{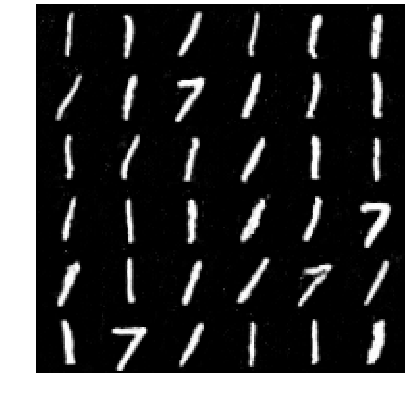} & \hspace{32pt}
		\includegraphics[height=0.30\textwidth,width=0.30\textwidth]{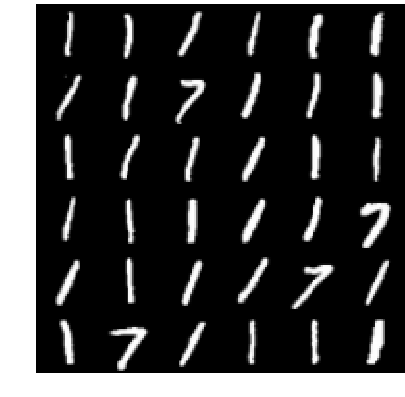} \\
		WGAN &  \hspace{32pt} Closest Real\\
		\includegraphics[height=0.30\textwidth,width=0.30\textwidth]{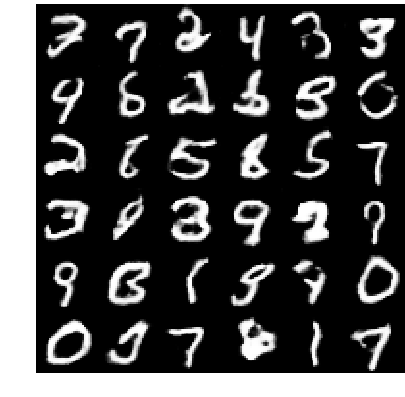} & \hspace{32pt}
		\includegraphics[height=0.30\textwidth,width=0.30\textwidth]{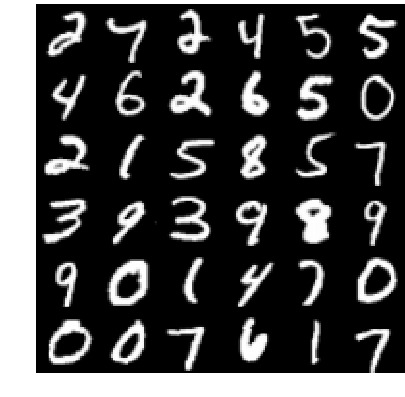} \\
		
		WGAN-GP &  \hspace{32pt} Closest Real \\	 
		\includegraphics[height=0.30\textwidth,width=0.30\textwidth]{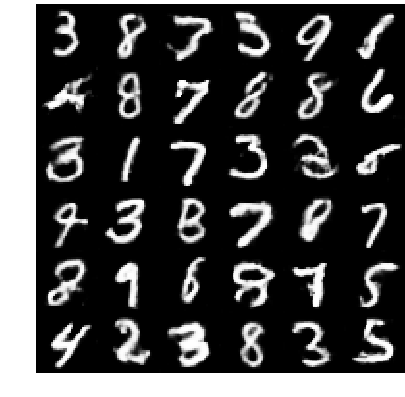} & \hspace{32pt}
		\includegraphics[height=0.30\textwidth,width=0.30\textwidth]{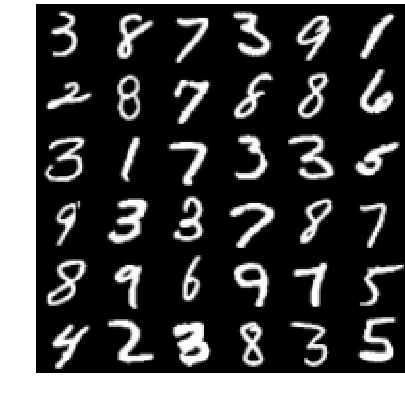} \\
		
	\end{tabular}	
	
	\caption{Generated samples for MNIST with GAN, WGAN and WGAN-GP. The closest real points were chosen by the cost function definition of the model. For vanilla GANs we used the Euclidean distance, since there is no cost function for this model.}\label{fig:MNIST}
\end{figure}
\begin{figure}[H]
	\centering
	\setlength{\extrarowheight}{.5em}
	\Large
	\begin{tabular}[b]{ c c }
		Square Assignment &  \hspace{32pt} Closest Real\\
		\includegraphics[height=0.30\textwidth,width=0.30\textwidth]{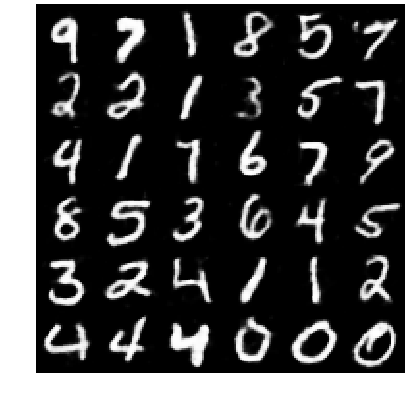} & \hspace{32pt}
		\includegraphics[height=0.30\textwidth,width=0.30\textwidth]{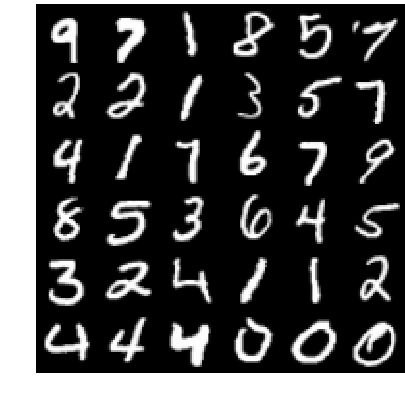} \\
		
		SSIM Assignment&  \hspace{32pt} Closest Real \\	
		\includegraphics[height=0.30\textwidth,width=0.30\textwidth]{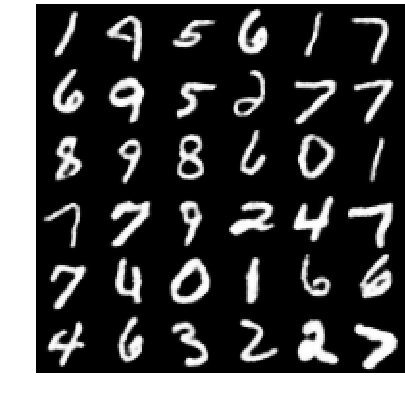} & \hspace{32pt}
		\includegraphics[height=0.30\textwidth,width=0.30\textwidth]{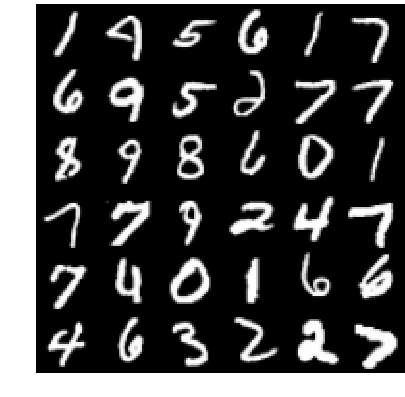} \\	
	\end{tabular}	
	
	\caption{Generated samples for MNIST with the assignment approach. The closest real points were chosen with respect to the cost function defined by the model.}\label{fig:MNIST_new}
\end{figure}

\begin{table}[h!]
	\begin{center}
		\begin{tabular}{c c c c c c }
			&  GAN  & WGAN  & WGAN-GP &  Square  & SSIM  \\
			Wasserstein Metric  & 15.80 & 12.59 & 10.69   & 9.68    &9.67 \\
			Assignment Variance &       & 0.46  & 0.29    & 0.14    & 0.21
		\end{tabular}
		\caption{Metrics applied on MNIST}\label{table:MNIST}
	\end{center}
\end{table}
 We generated samples  together with the closest real point based on the cost function of the method. Results from the models we use as comparison can be seen in figure \ref{fig:MNIST} and our results can be seen in \ref{fig:MNIST_new}. The original GAN approach seems to only produce two rather similar looking numbers while the other approaches produce a larger variety of numbers. The quality of the images seems similar to us while a little bit sharper for our squared and SSIM assignment. Additionally, we notice that WGAN, GAN-GP and the square assignment seem to produce numbers where pixels inside the lines of the numbers are missing or are not fully white while the SSIM assignment generated images are fully connected. We think this behavior reflects the fact that SSIM optimizes for the structural integrity of the number as well as equal luminance and contrast. The Wasserstein metric in Table \ref{table:MNIST} reflects the perceived image quality showing the best results for our method. Then looking at the numbers for the variance from optimal assignment we can see that our methods indeed managed to achieve their training objective the closest by having an equal spread around the dataset points.
In conclusion one can say that the experiments for MNIST show what we expected, the square distance is able to move the model closer to the real distribution by having a smoother gradient when points get close while the SSIM Assignment manages to produce perceptual more appealing images.

\subsubsection{Fashion MNIST}
\begin{figure}[H]
	\centering
	\setlength{\extrarowheight}{.5em}
	\Large
	\begin{tabular}[b]{c c}
		GAN & \hspace{32pt} Closest Real \\
		\includegraphics[height=0.30\textwidth,width=0.30\textwidth]{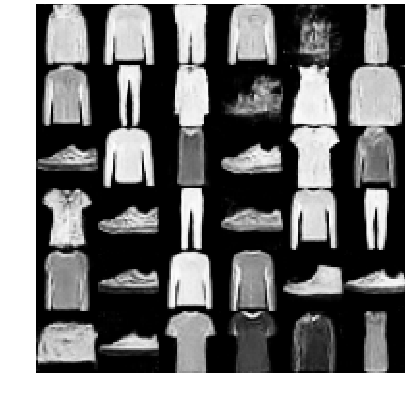} & 
		 \hspace{32pt}\includegraphics[height=0.30\textwidth,width=0.30\textwidth]{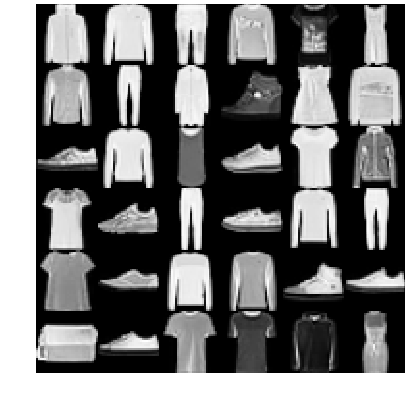} \\
		WGAN &  \hspace{32pt}Closest Real\\	
		\includegraphics[height=0.30\textwidth,width=0.30\textwidth]{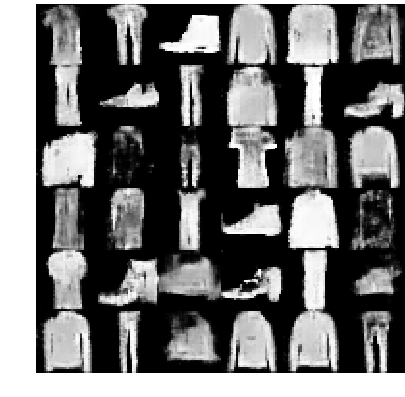} &  \hspace{32pt}
		\includegraphics[height=0.30\textwidth,width=0.30\textwidth]{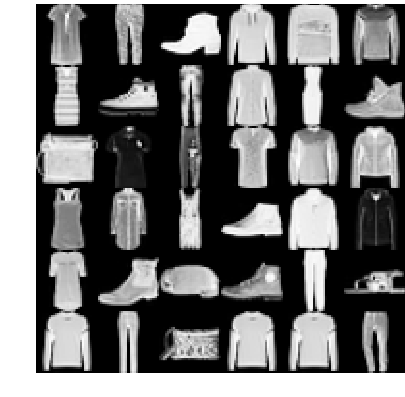} \\	WGAN-GP &  \hspace{32pt}Closest Real\\
		\includegraphics[height=0.30\textwidth,width=0.30\textwidth]{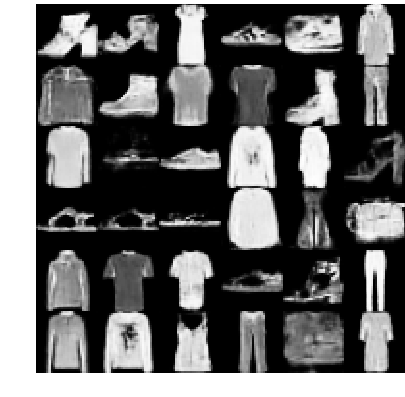} &  \hspace{32pt}
		\includegraphics[height=0.30\textwidth,width=0.30\textwidth]{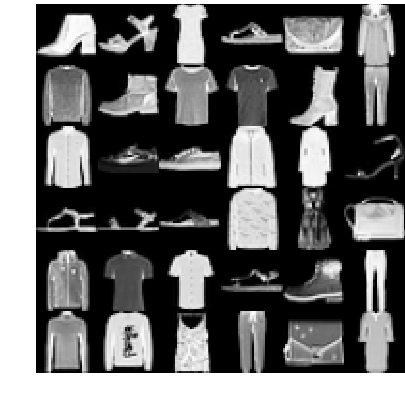} 
	\end{tabular}

	\caption{Generated samples for Fashion-MNIST with GAN, WGAN, and WGAN-GP.}\label{fig:fashion}
\end{figure}
\begin{figure}[H]
	\centering
	
	\setlength{\extrarowheight}{.5em}
	\Large
	\begin{tabular}[b]{c c}
		Square Assignment  & \hspace{32pt} Closest Real\\
		\includegraphics[height=0.30\textwidth,width=0.30\textwidth]{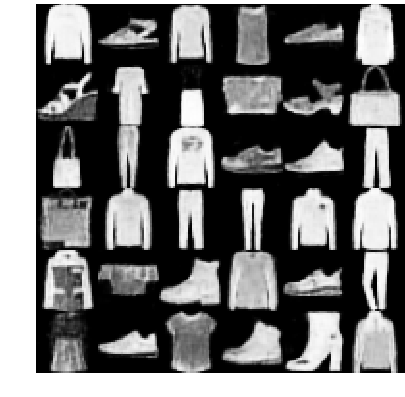} &  \hspace{32pt}
		\includegraphics[height=0.30\textwidth,width=0.30\textwidth]{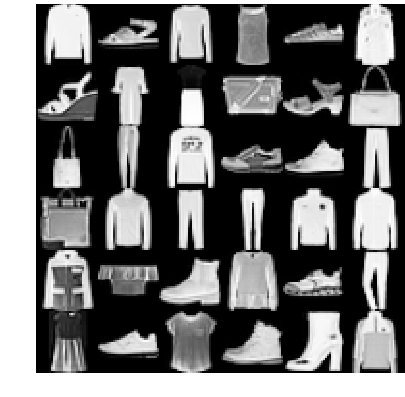} \\
		SSIM Assignment & \hspace{32pt} Closest Real\\	
		\includegraphics[height=0.30\textwidth,width=0.30\textwidth]{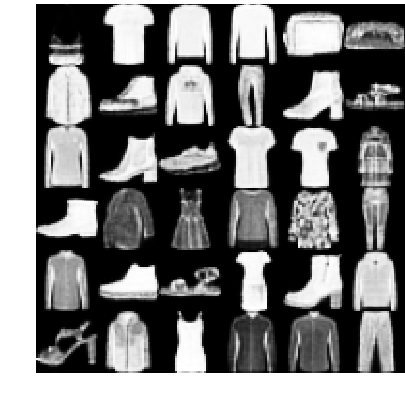} &  \hspace{32pt}
		\includegraphics[height=0.30\textwidth,width=0.30\textwidth]{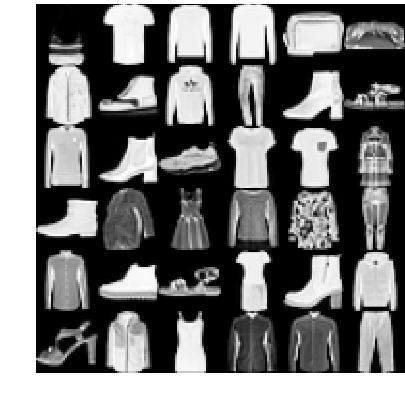} \\
	\end{tabular}	
	
	\caption{Generated samples for Fashion-MNIST for  square Assignment and SSIM assignment.}\label{fig:fashion_new}
\end{figure}
\begin{table}[h!]
	\begin{center}
		\begin{tabular}{c c c c c c c}
			&  GAN     & WGAN  & WGAN-GP &  Square & SSIM   \\
			Wasserstein Metric  & 11.03    & 11.19 & 9.15    & 3.40    & 7.17  \\
			Assignment Variance &          & 1.99  &0.82     & 0.01     & 0.05
		\end{tabular}
		\caption{Metrics for the Fashion-MNIST}\label{table:fashion}
		
	\end{center}
\end{table}

To judge  the visual quality, we can check the blurriness of the clothes at the edges of  transition from the background and additionally can see if smaller details are present like prints on t-shirts, zippers and wrinkles. The results in figure \ref{fig:fashion} show the output of the models for comparison and \ref{fig:fashion_new} the results of our approaches. The overall visual quality of the images seems to be equal except for WGAN showing artifacts and general blurriness. This might show that clipping the weights of the network  prevents it from learning the more complicated Fashion-MNIST distribution function. Additionally GAN seems to show no obvious mode collapse that is in strong contrast to the MNIST results. We interpret that discrepancy as an indicator that the original GAN is very sensitive to the choice of the hyperparameters, and we are in no way claiming that original GAN is not able to capture the MNIST dataset.  Another observation is that all GANs and even WGAN-GP, are producing points that seem to be far different from the real dataset but at the same time look realistic suggesting that it learned a model that found some underlying representation of the dataset. When looking at the details inside the clothings one can see that WGAN-GP as well as our approach produces some details but especially the SSIM assignment is reproducing a lot of the details. When examining the metrics in Table \ref{table:fashion} we can see that both of our approaches generate samples that are closer to the real distribution measured by the Wasserstein-1 distance. When looking at the variance from optimal assignment one can see that how well the approach respectively achieves its objective. For both of our assignment methods we can see that they generate an nearly equal amount of points around the real data points.

\subsection{Conclusion}
We explored the idea of training GNs with various optimal transport costs. As the first cost we choose the squared Euclidean norm, that should  have a smoother gradient for  points that lie close to each other. The second cost was the structural similarity index proposed by \cite{Wang:2004:IQA:2319031.2320551}, that tries to assess image quality by accounting for luminance, contrast and structure in the image that better reflect human perception of image quality. Choosing this cost allows us to train our generative model for generating images of high perceptual quality. To train generative models with these new costs a novel training procedure was introduced that allows us to use  these costs but they can be exchanged for any metric. The downside is that the computational effort for a training step increases superlinear with respect to the number of points in the dataset. Our experiments show that it is indeed possible to train a model with our proposed cost and decrease the distance between the generated points and the points in the dataset compared to approaches by other authors. Furthermore, the experiment results with the SSIM cost shows that the cost indeed influences the appearance of the generated images. The Wasserstein-1 metric applied on the results, further shows that we are able to better approximate the distribution of both MNIST as well Fashion-MNIST datasets when using the assigning approach.

Several new research directions can build upon our work. There can be follow up work that tries to tackle the computational burden introduced by our approach. We think that the largest improvement can be achieved by exploring more efficient ways to find the closest neighbours between the real and generated points that are necessary for our algorithm. Secondly on can try new costs that better reflect the training objective one tries to achieve. Especially with datasets where the Euclidean distance is a poor choice for assessing the similarity between data points. At last, one can look at the architecture choices made for the neural networks, the optimizers and the hyperparameters to either optimize them in general or for a given dataset and training objective. 

Our repository for the experiment can be found in https://github.com/artnoage/Optimal-Transport-GAN.

\section*{Appendix}
\begin{theorem}\label{one}
Let $\psi_{w}:\X\rightarrow\mathbb{R},$ a collection of functions parameterized by $w.$ Let also assume that for every $y\in\X, \psi_{w}(y)$ is a continuous function with respect to $w,$ and that $c$ satisfies the Assumption \ref{assumption}. Finally, let $\mathbb{P}$ be a distribution in   $\X,$ that is absolutely continuous with respect to the Lebesgue measure. 

For $w_{0}$ and a finite set $Y,$ we have
that for almost every $x\in\X,$ the expression
$$ \psi_{w_{0}}(y) + c(x,y)$$ has a unique minimizer in $Y.$ Evermore with probability one, for every independent random sample, with respect to $\mathbb{P}$ of points $\{x_{1},..,x_{n},\dots\}$ in $X$, we have that it exists $\delta(\{x_{1},\dots,x_{n}\}),$ such that
\begin{equation}
\arg\min_{Y}\{\psi_{w_{0}}(y) + c(x_{i},y)\}=\arg\min_{Y}\{\psi_{w}(y) + c(x_{i},y)\},\hspace{10pt} \forall w\in B(w_{0},\delta(\{x_{1},\dots,x_{n}\})).
\end{equation}
\end{theorem}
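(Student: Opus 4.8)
The plan is to establish the two assertions in order, since the second rests on the first. Throughout I would write $F^{w}(x,y):=\psi_{w}(y)+c(x,y)$, so that the object minimized over the finite set $Y$ is $\min_{y\in Y}F^{w}(x,y)$; the structural fact I would lean on is that only the additive term $\psi_{w}(y)$ in $F^{w}(x,y)$ depends on $w$, while $Y$ is finite, so a perturbation of $w$ changes $F^{w}(x,\cdot)$ by a small amount uniformly over $Y$.

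\emph{Step 1: the set of bad $x$ is Lebesgue-null.} I would first observe that if $\min_{y\in Y}F^{w_{0}}(x,\cdot)$ is attained at two distinct points $y\neq y'$, then in particular $F^{w_{0}}(x,y)=F^{w_{0}}(x,y')$, i.e.\ $c(x,y)-c(x,y')=\psi_{w_{0}}(y')-\psi_{w_{0}}(y)$. Hence the bad set is contained in the finite union $\bigcup_{y\neq y'\in Y}\{x:\ c(x,y)-c(x,y')=\psi_{w_{0}}(y')-\psi_{w_{0}}(y)\}$, each term a level set of the continuous function $x\mapsto c(x,y)-c(x,y')$. I would then invoke Assumption~\ref{assumption} to conclude each such level set is Lebesgue-null (for $c=\|\cdot\|^{p}$ these are hyperplanes, respectively sheets of hyperboloids), so their finite union is too; and since $\mathbb{P}\ll\mathrm{Leb}$, the bad set is $\mathbb{P}$-null. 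That gives the first assertion, and for an i.i.d.\ sequence $x_{1},x_{2},\dots\sim\mathbb{P}$, countable subadditivity yields, with probability one, that \emph{every} $x_{i}$ has a unique minimizer $y_{i}^{*}:=\arg\min_{y\in Y}F^{w_{0}}(x_{i},y)$.

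\emph{Step 2: local stability of the minimizer in $w$.} Working on the full-probability event from Step~1 and fixing $n$, I would set, for each $i\le n$,
\[
g_{i}:=\min_{y\in Y,\ y\neq y_{i}^{*}}F^{w_{0}}(x_{i},y)-F^{w_{0}}(x_{i},y_{i}^{*})>0,\qquad g:=\min_{1\le i\le n}g_{i}>0,
\]
with $g_{i}>0$ exactly because the minimizer is unique. Using continuity of $w\mapsto\psi_{w}(y)$ at $w_{0}$ for each of the finitely many $y\in Y$, I would pick $\delta=\delta(\{x_{1},\dots,x_{n}\})>0$ so that $|\psi_{w}(y)-\psi_{w_{0}}(y)|<g/3$ for all $y\in Y$ and all $w\in B(w_{0},\delta)$. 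Then for such $w$, any $i\le n$, and any $y\neq y_{i}^{*}$ in $Y$, since $c(x_{i},\cdot)$ does not depend on $w$,
\[
\begin{aligned}
F^{w}(x_{i},y)-F^{w}(x_{i},y_{i}^{*})
&=\big(F^{w_{0}}(x_{i},y)-F^{w_{0}}(x_{i},y_{i}^{*})\big)\\
&\quad+\big(\psi_{w}(y)-\psi_{w_{0}}(y)\big)-\big(\psi_{w}(y_{i}^{*})-\psi_{w_{0}}(y_{i}^{*})\big)\\
&\ge\ g_{i}-\tfrac{2g}{3}\ \ge\ \tfrac{g}{3}\ >\ 0,
\end{aligned}
\]
so $y_{i}^{*}$ remains the unique minimizer at $w$, giving $\arg\min_{y\in Y}F^{w}(x_{i},y)=\{y_{i}^{*}\}=\arg\min_{y\in Y}F^{w_{0}}(x_{i},y)$ for all $w\in B(w_{0},\delta)$ and all $i\le n$, which is the second assertion.

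\emph{Main obstacle.} Step~2 is a soft continuity-and-finiteness argument, and I expect the only real work to be in Step~1 — specifically, extracting from Assumption~\ref{assumption} (which is stated for level sets in the \emph{second} variable) that the tie sets $\{x:\ c(x,y)-c(x,y')=\mathrm{const}\}$ are Lebesgue-null; this uses symmetry of $c$ together with the fact that for the costs of interest these are genuine lower-dimensional surfaces. Everything downstream of that point is bookkeeping.
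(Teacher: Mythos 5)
Your Step 2 is essentially the paper's own argument and is complete: on the full-measure event where every $x_{i}$ has a unique minimizer, take the smallest positive gap $g$ over the finitely many sample points and the finitely many competitors in $Y$, and use continuity of $w\mapsto\psi_{w}(y)$ at $w_{0}$ (for each of the finitely many $y\in Y$) to choose $\delta$ so that the perturbation cannot overturn any of the strict inequalities. The reduction of the bad set to the pairwise tie sets $\bigcup_{y\neq y'}\{x:\,c(x,y)-c(x,y')=\psi_{w_{0}}(y')-\psi_{w_{0}}(y)\}$ is also correct, and is in fact cleaner than the paper's route (the paper argues via level sets of the single function $c(\cdot,y_{0})$, using the claim that a positive-measure set of minimizing $x$'s forces one such level set to have positive measure, which does not follow since a positive-measure set decomposes into uncountably many level sets).

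The genuine gap is precisely the step you flag and then defer: that each tie set is Lebesgue-null. This does not follow from Assumption \ref{assumption}, which controls level sets of $y\mapsto c(x,y)$ in the \emph{second} variable; the tie sets are level sets of $x\mapsto c(x,y)-c(x,y')$, a different family, and having null level sets is not preserved under taking differences of two functions. Nor is your parenthetical fallback correct in the generality claimed: for $c$ a norm the equidistant set $\{x:\|x-y\|=\|x-y'\|\}$ can have nonempty interior (e.g. the $\ell^{1}$ norm in $\mathbb{R}^{2}$), so ``symmetry of $c$ plus these are lower-dimensional surfaces'' is not a proof but an additional hypothesis. What is actually needed, and what should be assumed or verified cost-by-cost, is that the pairwise tie sets $\{x:\,c(x,y)-c(x,y')=\mathrm{const}\}$ are Lebesgue-null; this does hold for the squared Euclidean cost (the tie sets are hyperplanes) and the Euclidean norm (hyperboloid sheets), which are the cases the paper cares about, but it is not a consequence of Assumption \ref{assumption} as stated. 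Since the paper's own proof of this first assertion is also defective at the same point, your proposal is no worse, but as submitted the first assertion is not established.
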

\begin{proof}
	Let assume that the set $\widetilde{\X},$ of all points that have multiple minimizers has positive Lebesgue measure. Then, since $Y$ is finite, there exists at least one $y_{0}\in Y$ such that $\widetilde{\X}_{y_{0}}$ of the points that are minimizers for $$ \psi_{w_{0}}(y_{0}) + c(x,y_{0})$$ has positive Lebesgue measure. From that, we can induce that the set $$\{x\in\widetilde{\X}:c(x,y_{0})= \{\psi_{w_{0}}(y_{0}) + c(x_{0},y_{0})\} -\psi_{w_{0}}(y_{0})= c(x_{0},y_{0})\},$$ where $x_{0}$ is an arbitrary but fixed point in $\widetilde{\X}_{y_{0}},$ has positive measure. contradicts the assumption \ref{assumption}.
	
	Now since the points $x_{1},\dots,x_{n},\dots$ are sampled independently from $\mathbb{P},$ with probability one, the expression $\psi_{w_{0}}(y) + c(x_{i},y),$
	has unique minimizer $y(x_{i}),$ and further more it exists $\epsilon(x_{i}),$ such that  
	\begin{equation}
	\psi_{w_{0}}(y) + c(x_{i},y)>\psi_{w_{0}}(y(x_{i})) + c(x_{i},y(x_{i}))+\epsilon(x_{i}),\hspace{16pt} \forall y\in Y\setminus\{y(x_{i})\}.
	\end{equation}
Now if we pick $\delta(\{x_{1},\dots,x_{n}\})$ such that 
$$|f_{w}(y_{i})-f_{w_{0}}(y_{i})|<\min_{x_{i}}\epsilon(x_{i}),$$ we get the result.
\end{proof}

We conclude with the proof of Theorem \ref{minith}.

\begin{proof}[Proof of Theorem \ref{minith}]
Let $Q$ be an optimal plan between $\mu$ and $\nu.$ By definition of $\widetilde{T},$
we have
$$\psi(\widetilde{T}(x))+c(x,\widetilde{T}(x))=\psi^{supp(\nu),c}(x)=\inf_{y\in supp(\nu)}\{\psi(y)+c(x,y)\}\leq \psi(y)+c(x,y) \hspace{16pt} \forall y\in supp(\nu).$$
By integrating with respect to $Q,$ we have
$$\int \psi(\widetilde{T}(x))+c(x,\widetilde{T}(x))dQ \leq \int \left(c(x,y)+\psi(y)\right)dQ,$$
which gives
$$\int\psi(\widetilde{T}(x))d\mu+\int c(x,\widetilde{T}(x))d\mu\leq \int\psi(y)d\nu+\int c(x,y)dQ.$$
Since $\widetilde{T}_{\sharp}\mu=\nu,$ the last inequality gives us
$$\int c(x,\widetilde{T}(x))d\mu\leq\int c(x,y)dQ,$$
which proves that $\widetilde{T}$ is an optimal map.
\end{proof}
\bibliographystyle{plain}
\bibliography{wgan,literatur,bib}
\end{document}